\pgfplotsset{compat=1.14}
\newlength\figureheight
\newlength\figurewidth
\setlist[description]{font=\normalfont\quad}
\DeclareMathOperator{\smallUnd}{\bigwedge}
\title{A Rate-Distortion Framework for Explaining\\Neural Network Decisions}
\author{%
  Jan Macdonald,\: Stephan W{\"a}ldchen,\: Sascha Hauch \\
  Technische Universit{\"a}t Berlin\\
  \texttt{\{macdonald, stephanw, hauch\}@math.tu-berlin.de} \\
  \And
  Gitta Kutyniok \\
  Technische Universit{\"a}t Berlin\\
  University of Troms{\o}\\
  \texttt{kutyniok@math.tu-berlin.de} 
}
\begin{document}

\maketitle

\begin{abstract}
  We formalise the widespread idea of interpreting neural network decisions as an explicit optimisation problem in a rate-distortion framework. A set of input features is deemed relevant for a classification decision if the expected classifier score remains nearly constant when randomising the remaining features. We discuss the computational complexity of finding small sets of relevant features and show that the problem is complete for $\SNP^{\SPP}$, an important class of computational problems frequently arising in AI tasks. Furthermore, we show that it even remains $\SNP$-hard to only approximate the optimal solution to within any non-trivial approximation factor. 
  Finally, we consider a continuous problem relaxation and develop a heuristic solution strategy based on assumed density filtering for deep ReLU neural networks. We present numerical experiments for two image classification data sets where we outperform established methods in particular for sparse explanations of neural network decisions.
\end{abstract}

\section{Introduction}\label{sec:introduction}

Traditional machine learning models such as linear regression, decision trees, or $k$-nearest neighbours allow for a straight-forward human interpretation of the model prediction.
In contrast, the reasoning of highly nonlinear and parameter-rich neural networks remains generally inaccessible.
Recent years have seen progress on this front with the introduction of multiple explanation models for deep neural networks \cite{bach-plos15, NIPS2017_7062shap, marcotr2016lime, ShrikumarGK17deeplift, simonyan2013deep, zeiler2014visualizing}. These models provide additional information to a prediction by assigning importance values to individual input features .
The evaluation of the soundness of these methods has so far mostly been limited to comparison with human intuition as to which features are important. 
Notable exceptions are Shapley values \cite{shapley1952gamevalues} that draw justification from game theoretic aspects, as well as the proposition of pixel-flipping to numerically compare explanation methods \cite{WojBGM2017pixelflip}.
However, there is yet no formal definition of what it means for an input feature to be relevant for a classification decision. 

In this paper we introduce a rigorous approach to obtain interpretable neural network decisions.
More precisely, in \cref{sec:ratedistortion} we formulate the problem of determining the most relevant components of an input signal for a classifier prediction as an optimisation problem in a rate-distortion framework. We show in a worst-case analysis in \cref{sec:complexity} that this problem is generally hard to solve and to approximate, which justifies the use of heuristic methods. In \cref{sec:heuristic} we propose a problem relaxation together with a heuristic solution strategy for deep feed forward neural networks. Finally, we present numerical experiments and compare different explanation methods for two image classification data sets in \cref{sec:experiments}.

\paragraph{Notation}
Throughout, $d\in\N$ is the dimension of the signal domain, $\bfx\in[0,1]^d$ is an arbitrary fixed input signal, and $\Phi\colon[0,1]^d\rightarrow[0,1]$ is a classifier function for a signal class $\mathcal{C}\subseteq[0,1]^d$. The function $\Phi$ can for example be described by a neural network. The classification score $\Phi(\bfx)$ represents the classifiers prediction on how likely it is that $\bfx$ belongs to the class $\mathcal{C}$. We set $[d]=\{1,\dots,d\}$ and for a subset $S\subseteq[d]$ denote by $\bfx_S$ the restriction of $\bfx$ to components indexed by $S$.
Finally, $\bfone_d\in\R^d$ is a vector of all ones, $\diag\kl{\bfx}$ the diagonal matrix with entries given by $\bfx$, and $\odot$ (resp. $\oslash$) the component-wise Hadamard product (resp. quotient) of vectors or matrices of the same dimensions. We write $\bfx^2 = \bfx\odot\bfx$ and consider univariate functions applied to vectors to act component-wise.

\section{Rate-Distortion viewpoint}\label{sec:ratedistortion}
We formulate the task of explaining the classifier prediction $\Phi(\bfx)$ as finding a partition of the components of $\bfx\in\ekl{0,1}^d$ into a subset $S\subseteq[d]$ of \emph{relevant} components and its complement $S^c$ of \emph{non-relevant} components. The partition should be chosen in a way such that fixing the relevant components already determines the classifier output for almost all possible assignments to the non-relevant ones. More precisely, let $\CV$ be a probability distribution on $[0,1]^d$ and $\bfn\sim\CV$ a random vector. We define the \emph{obfuscation} of $\bfx$ with respect to $S$ and $\CV$ as a random vector $\bfy$ that is deterministically defined on $S$ as $\bfy_S=\bfx_S$ and distributed on the complement according to $\bfy_{S^c}=\bfn_{S^c}$. We write $\CV_S$ for the resulting distribution of $\bfy$. Having only knowledge about the signal on $S$ and filling in the rest of the signal randomly will mostly preserve the class prediction if $\bfx_S$ contains the information that was relevant for the classifier decision. We measure the change in the classifier prediction using the squared distance. The expected distortion of $S$ with respect to $\Phi$, $\bfx$, and $\CV$ is defined as 
\begin{equation*}\label{eq:dist} 
    D(S) = D(S,\Phi,\bfx,\CV) = \E_{\bfy \sim \CV_{S}}\left[\frac{1}{2}\left(\Phi(\bfx)-\Phi(\bfy)\right)^2\right].
\end{equation*}

We naturally arrive at a rate-distortion trade-off that intuitively gives us a measure of relevance. The terminology is borrowed from information theory where rate-distortion is used to analyse lossy data compression. In that sense, the set of relevant components is a compressed description of the signal and the expected deviation from the classification score is a measure for the reconstruction error. We define the rate-distortion function as
\begin{equation}\label{eq:rate}
    R(\epsilon) = \min_{} \skl{ \,\bkl{S} \,:\, S \subseteq [d],\, D(S) \leq \epsilon\,}.
\end{equation}
The smallest set $S$ that ensures a limited distortion will be composed of the most relevant input components. This rate-distortion function will also allow us later to compare the performance of different explanation models. \par 

We now want to discuss the difficulty of finding such a set. Note, that the trivial choice of setting $S=[d]$ ensures zero distortion. We show that for distortion limits greater than zero one cannot systematically find a set of relevant components that is significantly smaller than the trivial set. This specifically holds when $\Phi$ is represented by a neural networks which is the case we are particularly interested in. We derive our hardness results for the special case of Boolean circuits representable by ReLU neural networks of moderate size.\footnote{The depth can be chosen constant and the total number of neurons polynomial in $d$.}

\section{Complexity theoretic analysis}\label{sec:complexity}

Let us for now consider the special case of binary input signals $\bfx \in \skl{0,1}^d$ and classifier functions $\Phi: \skl{0,1}^d \rightarrow \skl{0,1}$ described as Boolean circuits, as well as the uniform distribution over binary vectors, i.e. $\CV = \mathcal{U}\kl{\skl{0,1}^d}$. \par 
\subsection{Discrete problem formulation}
We call a subset $S\subseteq\ekl{d}$ a $\delta$\emph{-relevant} set for $\Phi$ and $\bfx$ if
$\P_{\bfy} \kl{ \Phi(\bfy) = \Phi(\bfx)\,\middle|\,\bfy_S = \bfx_S} \geq \delta$.
We can assert that a set $S$ is $\delta$-relevant if and only if it has limited distortion $D(S) \leq \frac{1-\delta}{2}$. Therefore calculating the rate-distortion function is essentially the same task as finding minimal relevant sets.
\begin{definition}
For $\delta\in\ekl{0,1}$ we define the \textsc{Relevant-Input} problem as follows. \\
\textsc{Given:} $\Phi\colon\skl{0,1}^d\to \skl{0,1}$, $\bfx\in\skl{0,1}^d$, and $k\in [d]$.\\
\textsc{Decide:} Does there exist an $S \subseteq [d]$ with $\bkl{S} \leq k$ such that $S$ is $\delta$-relevant for $\Phi$ and $\bfx$?
\end{definition}
The optimisation version associated to this decision problem can be defined in the standard way.
\begin{definition}
For $\delta\in\ekl{0,1}$ we define the \textsc{Min-Relevant-Input} problem as follows. \\
\textsc{Given:} $\Phi\colon\skl{0,1}^d\to \skl{0,1}$ and $\bfx\in\skl{0,1}^d$.\\
\textsc{Minimise:} $k \in [d]$  such that there exists an $S \subseteq [d]$ with $\bkl{S} \leq k$ that is $\delta$-relevant for $\Phi$ and $\bfx$.
\end{definition}
Then the following hardness result holds \cite{wmhk-2019-rel-compl}.
\begin{theorem}\label{thm:nppp-complete}
 For $\nicefrac{1}{2}\leq \delta <1$ the \textsc{Relevant-Input} problem is $\SNP^{\SPP}$-complete.
\end{theorem}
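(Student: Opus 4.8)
The plan is to establish the two halves of completeness separately. Membership in $\SNP^{\SPP}$ follows from the guess-and-check description of that class: on input $(\Phi,\bfx,k)$ one nondeterministically guesses the indicator vector of a candidate set $S\subseteq[d]$, rejects unless $\bkl{S}\le k$, and then verifies that $S$ is $\delta$-relevant. By the observation recorded just before the definition of \textsc{Relevant-Input}, this verification is precisely the question whether $\#\skl{\bfz\in\skl{0,1}^{\bkl{S^c}}:\Phi(\bfx_S,\bfz)=\Phi(\bfx)}\ge\delta\cdot 2^{\bkl{S^c}}$, that is, a threshold count on the polynomial-size circuit $\bfz\mapsto\1\ekl{\Phi(\bfx_S,\bfz)=\Phi(\bfx)}$, which is constructed from the input in polynomial time. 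For a fixed rational $\delta$ this is a $\SPP$ predicate (one turns the $\delta$-threshold into a strict majority by padding with polynomially many dummy inputs), so the whole procedure runs in nondeterministic polynomial time with a single $\SPP$ query and \textsc{Relevant-Input}$\in\SNP^{\SPP}$. The restriction $\delta<1$ enters here: for $\delta=1$ the verification turns into a universally quantified statement over $\bfz$ and the problem moves to a different complexity class.

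For hardness I would reduce from a standard $\SNP^{\SPP}$-complete problem of the form: given a Boolean circuit $\psi(u,v)$ with $u\in\skl{0,1}^m$ and $v\in\skl{0,1}^n$, decide whether there is an assignment $u$ with $\P_{v}\ekl{\psi(u,v)=1}\ge\vartheta$, where the uniform distribution is used for $v$. For $\vartheta=\nicefrac{1}{2}$ this is \textsc{E-MajSat}, proved $\SNP^{\SPP}$-complete by Littman, Goldsmith, and Mundhenk, and a general fixed threshold $\vartheta$ matching $\delta$ is obtained by padding $\psi$ with additional random inputs. From such an instance I would build $(\Phi,\bfx,k)$ as follows. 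Each variable $u_i$ is represented by a ``positive'' and a ``negative'' block of $R$ input coordinates, all fixed to $1$ in $\bfx$; the circuit decodes a literal from whichever block is frozen, using that the conjunction of a block equals $1$ with certainty if the whole block lies in $S$ and equals $0$ except with probability $2^{-R}$ otherwise. Setting $k=Rm$ lets the budget pay for exactly one block per variable, so a cheap enough $S$ is forced to encode a consistent assignment $\hat u$. Each random variable $v_j$ is represented as the parity of a fresh block of more than $k$ coordinates, so that no budget-respecting $S$ can fix or bias any $v_j$. The circuit then essentially outputs $\psi(\hat u(\bfy),v(\bfy))$, augmented by a small clause forcing $\Phi(\bfx)=1$, and choosing $R=\Theta(n+\log m)$ makes the decoding error negligible against the $2^{-n}$-resolution of the count of $\psi$. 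One then verifies that the set $S$ encoding a witness $u^\star$ achieves $\P_{\bfy}\ekl{\Phi(\bfy)=\Phi(\bfx)}\ge\P_{v}\ekl{\psi(u^\star,v)=1}$, whereas for \emph{every} $S$ the same probability equals, up to the negligible decoding error, a convex combination of the values $\P_{v}\ekl{\psi(u,v)=1}$ and therefore stays on the correct side of $\vartheta$. Taking $\vartheta=\delta$ gives hardness for every $\delta\in[\nicefrac{1}{2},1)$; checking in addition that the conjunction, parity, and evaluation gadgets are realised by constant-depth polynomial-size ReLU networks gives the strengthening mentioned in the footnote.

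The step I expect to be the main obstacle is the calibration in the hardness reduction: making the necessarily inexact reconstruction of $u$ from $\bfx_S$ shift the success probability only to the ``safe'' side of the fixed threshold $\delta$, excluding every way of spending the budget $k$ other than encoding a consistent assignment, and ensuring that no $S$ that diverts budget onto the parity-protected $v$-coordinates or onto extra blocks can gain anything. By comparison, the membership argument and the bookkeeping for ReLU-representability are routine.
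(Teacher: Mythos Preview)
The paper does not actually prove \cref{thm:nppp-complete}: it is stated with a citation to \cite{wmhk-2019-rel-compl} and no argument is given in this paper. The only proof supplied in this section is for the inapproximability result (\cref{thm:inapprox}), via \cref{lem:ap-np-hard}. So there is no in-paper proof to compare your proposal against.

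That said, your outline follows the expected template for $\SNP^{\SPP}$-completeness and is broadly sound. The membership argument is standard and correct, including the padding to turn a rational $\delta$-threshold into a strict-majority $\SPP$ query. For hardness, reducing from \textsc{E-MajSat} is the natural choice, and your gadget ideas---replicated conjunction blocks to amplify a frozen literal, parity over more than $k$ coordinates to shield the random variables from any budget-respecting $S$---are the right kind of tools. You also correctly flag the genuinely delicate step: ruling out budget allocations that do not encode a consistent assignment. One concrete gap to watch is your two-block-per-variable encoding with budget $k=Rm$: nothing yet forces one block \emph{per variable} rather than, say, both blocks for half the variables and neither for the rest, and your decoding rule has to make every such deviation provably no better than some honest assignment $\hat u$. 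The construction in the proof of \cref{lem:ap-np-hard} in this paper (a single conjunction block per variable together with a global disjunct $\bigwedge_i v_i$ to force $\Phi(\bfx)=1$) suggests a simpler device that may be easier to calibrate, and is likely closer in spirit to what the cited reference does.
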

The class $\SNP^{\SPP}$ is the class of all problems decidable by a non-deterministic Turing machine with access to an oracle for problems in $\SPP$ \cite{gill1977computational}. 
The class $\SNP^\SPP$ appears frequently in artificial intelligence tasks such as optimisation under uncertainty \cite{littman1998computational} and is assumed significantly harder than either $\SNP$ or $\SPP$ \cite{dechter1998bucket, park2002map}. This means also that solving \textsc{Min-Relevant-Input} is an extremely difficult problem. In practice it would often suffice to solve it only approximately and one might hope that efficient approximation algorithms exist. Hence, a more practically relevant result is the following. 
\begin{theorem}\label{thm:inapprox}
  Assume $\SP\ne \SNP$ then for any $\alpha\in(0,1)$ there is no polynomial time approximation algorithm for \textsc{Min-Relevant-Input} with an approximation factor of $d^{1-\alpha}$.
\end{theorem}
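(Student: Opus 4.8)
The plan is a gap-amplifying reduction: I will show that a polynomial time $d^{1-\alpha}$-approximation for \textsc{Min-Relevant-Input} would decide an $\SNP$-hard problem, contradicting $\SP\ne\SNP$. Starting from (a gap-hardened version of) \textsc{3-Sat} on a formula $\phi$ with $n$ variables, I construct in polynomial time a classifier $\Phi\colon\skl{0,1}^d\to\skl{0,1}$ --- realised by a ReLU network of constant depth and polynomially many neurons, so that the instance is admissible --- together with an input $\bfx$, and thresholds $k_{\mathrm{small}}<k_{\mathrm{big}}$ with $k_{\mathrm{big}}>d^{1-\alpha}\,k_{\mathrm{small}}$, such that the smallest $\delta$-relevant set has size at most $k_{\mathrm{small}}$ when $\phi$ is satisfiable and at least $k_{\mathrm{big}}$ otherwise. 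A $d^{1-\alpha}$-approximation algorithm must return a value in $[\mathrm{OPT},d^{1-\alpha}\mathrm{OPT}]$, which separates the two cases and hence decides satisfiability in polynomial time.

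The core of the argument is the gadget. I split the $d$ variables as $z\cup w\cup u$ and let $\Phi$ be assembled from three interacting parts: an evaluator for $\phi$ reading $z$; a large block of ``rigidity guards'' on $w$, implemented by parities or exact-threshold gates, which have the crucial property that for any $\delta>\tfrac12$ keeping the distortion $D(S)$ below $\tfrac{1-\delta}{2}$ forces essentially all of $w$ into $S$; and a ``bypass'' on $u$ that is enabled precisely by a satisfying assignment of $\phi$ and, once enabled, renders $\Phi$ insensitive to the guards. Choosing $\bfx$ at the ``guards engaged'' value, in the unsatisfiable case the bypass can never fire, $\Phi$ is essentially fully sensitive at $\bfx$, and $k_{\mathrm{big}}=\Theta(|w|)$; in the satisfiable case a witness enables the bypass and a relevant set only needs to pin down the witness and locally neutralise the guards, which costs $k_{\mathrm{small}}=\mathrm{poly}(n)$. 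Evaluators, threshold and parity functions all admit standard constant-depth polynomial-size ReLU implementations, so $\Phi$ is of the required form. The amplification is then cheap padding: choosing $|w|=n^{c}$ for a large constant $c=c(\alpha)$ leaves $k_{\mathrm{small}}=\mathrm{poly}(n)$ untouched while $k_{\mathrm{big}}=\Theta(n^{c})=\Theta(d)$, so $k_{\mathrm{big}}/k_{\mathrm{small}}\ge d^{1-\alpha}$.

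The step I expect to fight with is making the \emph{probabilistic} relevance condition, with $\delta<1$, actually see the gap. For $\delta=1$ the condition degenerates to the purely logical statement ``fixing $\bfx_S$ makes $\Phi$ constant on the subcube through $\bfx$'', a bypass armed by a single satisfying assignment already suffices, and one even gets a simple reduction separating $\mathrm{OPT}=0$ from $\mathrm{OPT}\ge 1$. But for $\delta<1$ an obfuscation only has to preserve the label on a $\delta$-fraction of completions, and one satisfying assignment of $\phi$ shifts such a fraction by a mere $2^{-n}$, hence is invisible to the measure. This is why the reduction must begin from a PCP/gap form of \textsc{3-Sat} (or from a two-prover / Label-Cover instance after parallel repetition): the bypass has to be governed by a \emph{macroscopic} count of mutually consistent configurations --- an FGLSS-type quantity that differs by a constant fraction between the satisfiable and unsatisfiable cases --- rather than by a lone assignment, and one must verify that this constant gap survives both the rigidity guards and the blow-up $|w|=n^{c}$. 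Checking that in the unsatisfiable case no economical mixture of a few guard variables with a few $z$- and $u$-variables can force $D(S)<\tfrac{1-\delta}{2}$ is the delicate, calculation-heavy part of the proof.
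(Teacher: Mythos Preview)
Your high-level strategy --- a gap reduction from \textsc{Sat} followed by padding to amplify the ratio past $d^{1-\alpha}$ --- is exactly right, and you have correctly identified the central difficulty: for $\delta<1$ a single satisfying assignment is ``invisible'' to the probabilistic relevance condition because it only affects a $2^{-n}$-fraction of completions.

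However, your conclusion that one therefore \emph{must} invoke PCP/Label-Cover is mistaken, and this is where your plan diverges from the paper. The paper's reduction starts from plain \textsc{Sat} and handles $\delta<1$ by an elementary replication trick. Given $\Phi$ on $d$ variables, introduce $q=\lceil\log_2 d-\log_2(1-\delta)\rceil$ copies $\bfu^{(1)},\dots,\bfu^{(q)}$ of the variable block and feed the coordinate-wise AND $\bigwedge_j\bfu^{(j)}$ into $\Phi$. If $\bfx$ is a satisfying assignment and you fix only the coordinates $(i,j)$ with $x_i=1$ to $1$, then for each $i$ with $x_i=0$ the AND at position $i$ is $0$ unless all $q$ random copies are $1$, an event of probability $2^{-q}$. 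A union bound gives $\P(\text{AND}=\bfx)\ge 1-d\,2^{-q}\ge\delta$. Thus a \emph{single} witness becomes visible to the $\delta$-relevance condition at the cost of only a logarithmic blow-up $k'=dq$. No PCP, no macroscopic counting, no delicate mixture analysis.

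The ``guard'' side is equally simple: adjoin an extra block $\bfv\in\{0,1\}^{m'+p}$ and set $\Phi'(\bfU,\bfv)=\Phi(\bigwedge_j\bfu^{(j)})\lor\bigwedge_i v_i$, with $\bfx'=\bfone$. If $\Phi$ is unsatisfiable the first disjunct is identically $0$, so $\Phi'=1$ forces all $v_i=1$; any set of size $\le m'$ leaves at least $p$ of the $v_i$ random, giving probability $\le 2^{-p}<\delta$. Choosing $m'$ polynomially large in $d$ (explicitly, large enough that $k'(d')^{1-\alpha}<m'$) then yields the gap the approximation algorithm would have to straddle. Your parity-based rigidity guards and bypass gadget are not needed; a bare conjunction does the job.

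So: your outline is sound, but the ``calculation-heavy part'' you flagged as the obstacle dissolves once you see the $q$-fold AND amplification. The paper's argument is strictly more elementary than what you sketched.
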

We give a full proof in the following section and want to remark that this is a simplification of an even a slightly stronger result shown in \cite{wmhk-2019-rel-compl}. 

\subsection{Inapproximability}
An approximation algorithm for \textsc{Min-Relevant-Input} has an approximation factor $c\geq 1$ if it finds an approximate solution $k$ such that $k^\ast\leq k \leq ck^\ast$ for all problem instances, where $k^\ast$ denotes the respective exact solution. Choosing the trivial solution $S=\ekl{d}$, thus considering all components as relevant, results in a factor $d$. \Cref{thm:inapprox} states that it is generally hard to achieve better factors.

The proof proceeds in two steps. First, we introduce a gapped version of the decision problem and show that it is $\SNP$-hard. Second, we show that the gapped problem would be in $\SP$ if there exists a good polynomial time approximation algorithm for \textsc{Min-Relevant-Input}. 

\begin{definition}
For $\delta \in\ekl{0,1}$ we define the \textsc{Auxilliary Problem} (AP) as follows. \\
\textsc{Given}: $\Phi\colon\skl{0,1}^d\to\skl{0,1}$, $\bfx\in\skl{0,1}^d$, and $k,m\in\N$, $1\leq k\leq m\leq d$.\\
\textsc{Decide}: Which of the two options (if any) holds:
  \begin{description}[labelwidth=\widthof{\emph{Yes}-instances:\qquad}]
     \item[\emph{Yes}-instances:] There exists $S\subseteq\ekl{d}$ with $|S|\leq k$ and $S$ is $\delta $-relevant for $\Phi$ and $\bfx$.
     \item[\emph{No}-instances:] All $S\subseteq\ekl{d}$ with $|S|\leq m$ are not $\delta$-relevant for $\Phi$ and $\bfx$.
\end{description}
\end{definition}
The restriction to the case $k=m$ is exactly the \textsc{Relevant-Input} problem. However here we also allow the case $k<m$ with a gap in the set sizes. It might happen that none of the two options hold. In this case we consider any answer to be acceptable.

\begin{lemma}\label{lem:ap-np-hard}
  For $\delta\in(0,1)$ we have $\textsc{SAT}\preceq_p\textsc{AP}$, in particular in this case \textsc{AP} is $\SNP$-hard.
\end{lemma}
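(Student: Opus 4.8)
The plan is to give a polynomial‑time many‑one reduction $\varphi\mapsto(\Phi,\bfx,k,m)$ that sends satisfiable formulas to \emph{Yes}-instances of \textsc{AP} and unsatisfiable formulas to \emph{No}-instances; since every \textsc{SAT} instance is one or the other, this yields $\textsc{SAT}\preceq_p\textsc{AP}$ and hence $\SNP$-hardness. Write $\varphi$ as a CNF over variables $z_1,\dots,z_n$ (pre‑composing with the standard reduction to $3$-\textsc{SAT} if one wants to stay within constant‑depth circuits; the case $n=0$ is trivial). Pick a redundancy parameter $r=\ceil{\log_2(n/(1-\delta))}$ and a padding length $L\ge rn+\ceil{\log_2(1/\delta)}+1$, and let $\Phi$ act on $d=rn+L$ bits, split into $n$ \emph{blocks} $B_1,\dots,B_n$ of $r$ bits each and $L$ \emph{padding bits} $v_1,\dots,v_L$. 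Set $\bfx$ to be all ones on the blocks and all zeros on the padding bits, let $\widehat{a}_i(\bfy)=\bigwedge_{b\in B_i}y_b$, and define
\[
  \Phi(\bfy)=\neg\varphi\bigl(\widehat{a}_1(\bfy),\dots,\widehat{a}_n(\bfy)\bigr)\,\und\,(v_1\odr\cdots\odr v_L).
\]
Finally output $k=rn$ and $m=L-\ceil{\log_2(1/\delta)}-1$, so $1\le k\le m\le d$. The map is computable in polynomial time, and since $\varphi$ is a CNF and every $\widehat{a}_i$ is a single conjunction, $\Phi$ is a constant‑depth Boolean circuit of size polynomial in $d$, hence representable by a ReLU network of moderate size.

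For the \emph{No}-direction: if $\varphi$ is unsatisfiable then $\varphi\equiv0$, so $\Phi(\bfy)=v_1\odr\cdots\odr v_L$ for all $\bfy$ and $\Phi(\bfx)=0$. For any $S$ with $\bkl{S}\le m$ at least $L-m>\log_2(1/\delta)$ of the padding bits lie outside $S$ and are therefore uniform, so $\P_{\bfy}\kl{\Phi(\bfy)=0\,\middle|\,\bfy_S=\bfx_S}\le 2^{-(L-m)}<\delta$ and $S$ is not $\delta$-relevant; thus we obtain a \emph{No}-instance. For the \emph{Yes}-direction: fix a satisfying assignment $\bfa^\ast$ of $\varphi$ and take $S=\bigcup_{i:\,a_i^\ast=1}B_i$, so $\bkl{S}=r\,\bkl{\{i:a_i^\ast=1\}}\le rn=k$. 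Conditioning on $\bfy_S=\bfx_S$ forces $\widehat{a}_i(\bfy)=1=a_i^\ast$ for every $i$ with $a_i^\ast=1$, while for $i$ with $a_i^\ast=0$ the block $B_i$ is uniform and $\widehat{a}_i(\bfy)=0$ with probability $1-2^{-r}$; a union bound gives $(\widehat{a}_1(\bfy),\dots,\widehat{a}_n(\bfy))=\bfa^\ast$ with probability at least $1-n2^{-r}\ge\delta$, in which case $\varphi(\bfa^\ast)=1$ and $\Phi(\bfy)=0=\Phi(\bfx)$. Hence $S$ is $\delta$-relevant with $\bkl{S}\le k$, and we obtain a \emph{Yes}-instance.

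\emph{The main obstacle} is getting the polarity of the construction right. One would like to ``hard‑code'' a satisfying assignment into $\bfx$, but $\bfx$ must be produced without solving \textsc{SAT}; the role of the padding disjunction is instead to force $\Phi$ to be \emph{genuinely non‑constant} whenever $\varphi$ is unsatisfiable, so that pinning $\Phi$ down at $\bfx$ costs almost all $L$ padding bits, whereas a satisfying assignment, when it exists, can be \emph{recovered} by fixing only the $\le rn$ bits of the corresponding blocks. Making that recovery robust against the randomisation of the free coordinates is the remaining technical point: it is exactly what forces the redundant block encoding with $r=\Theta(\log n)$ and the union bound, and it is the only place the value of $\delta$ enters (through $r$, and through the gap $L-m$). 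Finally, choosing $L$ much larger than $rn$ makes the ratio $m/k$ in the produced \textsc{AP} instance as large as desired — the feature that the subsequent inapproximability argument exploits.
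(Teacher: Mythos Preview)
Your proof is correct and follows essentially the same approach as the paper: redundant block encoding of the variables (with multiplicity chosen so a union bound gives probability $\ge\delta$) together with a padding gadget that, in the unsatisfiable case, forces almost all padding bits to be fixed. Your construction is the De Morgan dual of the paper's --- you use $\neg\varphi\wedge(\text{OR of padding})$ with target value $\Phi(\bfx)=0$, whereas the paper uses $\Phi\vee(\text{AND of padding})$ with target value $1$ --- but the mathematical content of both directions is identical.
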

\begin{proof}
  Let $\Phi\colon\skl{0,1}^d\to\skl{0,1}$ be a \textsc{SAT} instance. We will construct $\skl{\Phi^\prime, \bfx^\prime, k^\prime, m^\prime}$ that is a \emph{Yes}-instance for \textsc{AP} if and only if $\Phi$ is a \emph{Yes}-instance for \textsc{SAT}. Let $q=\left\lceil\log_2\kl{d}-\log_2\kl{1-\delta}\right\rceil$ and $p=\left\lfloor-\log_2\kl{\delta}\right\rfloor+1$.
  We set $k^\prime = dq$, $m^\prime \geq k^\prime$ arbitrary but at most polynomial in $d$. Also let $\Phi^\prime\colon\skl{0,1}^{d\times q}\times\skl{0,1}^{m^\prime+p}\to\skl{0,1}$ be given by
  \[
    \Phi^\prime(\bfu^{(1)},\dots,\bfu^{(q)},\bfv)= \Phi\kl{\smallUnd_{j=1}^{q}\bfu^{(j)}}\odr\kl{\smallUnd_{i=1}^{m^\prime+p} v_i},
  \]
  where each $\bfu^{(j)}\in\skl{0,1}^d$ and the conjunction within $\Phi$ is component-wise, and set $\bfx^\prime = \bfone_{dq+m^\prime+p}$. This is a polynomial time construction. By the choice of $\Phi^\prime$ and $\bfx^\prime$ we guarantee $\Phi^\prime(\bfx^\prime)=1$ regardless of the satisfiability of $\Phi$. In the following we denote $ \mathbf{U}=(\bfu^{(1)},\dots,\bfu^{(q)})$.
  
  \paragraph{Necessity:} Let $\Phi$ be a \emph{Yes}-instance for \textsc{SAT}. Thus, there exists $\bfx\in\skl{0,1}^d$ with $\Phi(\bfx)=1$. Let $S=\skl{\,i\in [d]\,:\, x_i=1\,}$ and $S^\prime = S \times [q]$. Denote $A(\bfu^{(1)},\dots,\bfu^{(q)}) = \Und_{(i,j)\in S^\prime} u^{(j)}_i$. Clearly, $\bkl{S^\prime} \leq k^\prime$. Further, $S^\prime$ is $\delta$-relevant for $\Phi^\prime$ and $\bfx^\prime$ if $\P_{(\bfU,\bfv)}\kl{\Phi^\prime(\bfU,\bfv)\,\middle|\, A(\bfU)}\geq \delta$. We have
  \begin{align*}
    \P_{(\bfU,\bfv)}\kl{\Phi^\prime(\bfU,\bfv)\,\middle|\, A(\bfU)} &\geq \P_{\bfU}\kl{\smallUnd_{j=1}^{q} \bfu^{(j)}=\bfx\,\middle|\, A(\bfU)}.
  \end{align*}
  From this, with a union bound, we obtain
  \begin{equation*}
      \P_{\bfU}\kl{\smallUnd_{j=1}^{q} \bfu^{(j)}=\bfx \,\middle|\, A(\bfU)} = 1-\P_\bfU\kl{\exists i\in S^c\,:\,\smallUnd_{j=1}^{q} u^{(j)}_i}
      \geq 1 - |S^c|2^{-q} \geq \delta,
  \end{equation*}
  which shows that $\skl{\Phi^\prime,\bfx^\prime,k^\prime,m^\prime}$ is a \emph{Yes}-instance for \textsc{AP}.

  \paragraph{Sufficiency:} Now conversely let $\Phi$ be a \emph{No}-instance for \textsc{SAT}. Then for any $S^\prime\subseteq\ekl{dq+m^\prime+p}$ with $|S^\prime|\leq m^\prime$ we have
  \begin{align*}
  \P_\bfy\kl{\Phi^\prime(\bfy)=\Phi^\prime(\bfx^\prime)\,\middle|\, \bfy_{S^\prime}=\bfx^\prime_{S^\prime}} &= \P_{(\bfU,\bfv)}\kl{\smallUnd_{i=1}^{m^\prime+p} v_i\,\middle|\, (\bfU,\bfv)_{S^\prime}=\bfone} \\
  &\leq 2^{-(m^\prime+p-|S^\prime|)} < \delta, 
  \end{align*}
  showing that $S^\prime$ is not $\delta$-relevant for $\Phi^\prime$ and $\bfx^\prime$. Hence, $\skl{\Phi^\prime,\bfx^\prime,k^\prime,m^\prime}$ is a \emph{No}-instance for \textsc{AP}.
\end{proof}

Finally, we can prove the inapproximability of the \textsc{Min-Relevant-Input} problem.
\begin{proof}[Proof of \cref{thm:inapprox}]
    We show that the existence of a polynomial time approximation algorithm for \textsc{Min-Relevant-Input} with approximation factor $d^{1-\alpha}$ would allow us to decide \textsc{AP} in polynomial time for certain instances. These can be chosen as in the proof of \cref{lem:ap-np-hard}, which in turn implies that we could decide \textsc{SAT} in polynomial time. This is only possible if $\SP=\SNP$. 
    
    Let $\Phi\colon\skl{0,1}^d\to\skl{0,1}$ be a \textsc{SAT} instance and $\skl{\Phi^\prime,\bfx^\prime,k^\prime,m^\prime}$ an equivalent \textsc{AP} instance as in the proof of \cref{lem:ap-np-hard}. We have seen that there is some freedom in the choice of $m^\prime$ as long as it satisfies $k^\prime\leq m^\prime$ and is at most polynomial in $d$.
    We choose $ m'=\left\lceil \max(2k'({k'}^{1-\alpha}+p^{1-\alpha}), (2k')^{\frac{1}{\alpha}}+1) \right\rceil $ with $p=\left\lfloor-\log_2\kl{\delta}\right\rfloor+1$ as before. Recall that $k^\prime = dq$ with $q=\left\lceil\log_2\kl{d}-\log_2\kl{1-\delta}\right\rceil$, so clearly $m^\prime$ is polynomial in $d$ and $k^\prime\leq m^\prime$. Further, we have $ m'> (2k')^{\frac{1}{\alpha}} $ so $ 1-k'{m'}^{-\alpha}>\frac{1}{2} $ and therefore
    \[ m'(1-k'{m'}^{-\alpha})>\frac{m'}{2}\ge k'({k'}^{1-\alpha}+p^{1-\alpha}). \]
    Now let $d^\prime=k^\prime+m^\prime+p$ denote the number of variables of $\Phi^\prime$. By the subadditivity of the map $z\mapsto z^{1-\alpha}$, we finally obtain
    \[
        k^\prime {d^\prime}^{1-\alpha} = k^\prime(k^\prime+m^\prime+p)^{1-\alpha}
        \leq k^\prime\left({k^\prime}^{1-\alpha} + {m^\prime}^{1-\alpha} + p^{1-\alpha}\right)
        < m^\prime.
    \]
    It remains to show that an \textsc{AP} instance with $m^\prime > k^\prime {d^\prime}^{1-\alpha}$ can be decided by an approximation algorithm for \textsc{Min-Relevant-Input} with approximation factor ${d^\prime}^{1-\alpha}$. Assume such an algorithm exists and let $k$ be a solution produced by the algorithm. Then for the true optimal solution $k^\ast$ we have $k^\ast\leq k\leq {d^\prime}^{1-\alpha}k^\ast$. 
    
    Firstly, assume that $\skl{\Phi^\prime, \bfx^\prime, k^\prime, m^\prime}$ is a \emph{Yes}-instance for \textsc{AP}. Then there exists a $\delta$-relevant set of size $k^\prime$. However, notice that no set smaller than $k^\ast$ can be $\delta$-relevant. This implies $k^\ast\leq k^\prime$ and therefore $k\leq{d^\prime}^{1-\alpha}k^\prime <m^\prime$.
    
    Secondly, assume that $\skl{\Phi^\prime, \bfx^\prime, k^\prime, m^\prime}$ is a \emph{No}-instance for \textsc{AP}. Then all sets of size at most $m^\prime$ are not $\delta$-relevant. But there exists a $\delta$-relevant set of size $k^\ast$. This implies $k\geq k^\ast>m^\prime$.
    
    Altogether, checking whether $k<m^\prime$ or $k>m^\prime$ decides $\skl{\Phi^\prime,\bfx^\prime,k^\prime, m^\prime}$.
\end{proof}

\Cref{thm:inapprox} shows that no efficient approximation algorithm for \textsc{Min-Relevant-Input} exists (unless $\SP = \SNP$). Either we have to resort to heuristics, or introduce stronger restrictions on the problem. We choose the former and present a general heuristic for neural networks. To this end, we also further relax the problem formulation to a continuous setting.

\section{Problem relaxation and solution heuristic}\label{sec:heuristic}
The problem class  $\SNP^{\SPP}$ already gives a hint of what difficulties we have to overcome. First, we need an efficient way to judge whether a chosen set leads to small expected distortion. This amounts to calculating expectation values. Secondly, we need to optimise over all feasible sets --- a combinatorial optimisation problem.
We propose a heuristic solution for both hurdles when the classifier $\Phi$ is a deep neural network. 

\subsection{Neural network functions}
Let $L\in\N$ denote the number of layers of the neural network, $d_1, \dots, d_{L-1}\in\N$ and $d_0=d$, $d_L=1$. Further let $\kl{\bfW_1, \bfb_1}, \dots, \kl{\bfW_L, \bfb_L}$ with $\bfW_i\in\R^{d_{i}\times d_{i-1}}$, $\bfb_i\in\R^{d_i}$ for $i\in[L]$ be the weight matrices and bias vectors of an $L$-layer feed forward neural network. We then consider functions of the form
\[\Phi(\bfx) = \bfW_L\varrho(\bfW_{L-1}\varrho(\dots \varrho(\bfW_1\bfx+\bfb_1) \dots ) + \bfb_{L-1}) + \bfb_L.\]
In the following we consider the rectified linear unit (ReLU) activation function $\varrho(x)=\max\skl{0,x}$.

\subsection{Continuous problem relaxation}
To address the combinatorial optimisation problem we make use of the following relaxation. 
Instead of binary relevance decisions (\emph{relevant} versus \emph{non-relevant}) encoded by the set $S$, we allow for a continuous relevance score for each component, encoded by a vector $\bfs\in[0,1]^d$.
We redefine the obfuscation of $\bfx$ with respect to $\bfs$ as a component-wise convex combination
\begin{equation}\label{eq:contobfuscation}
    \bfy = \bfx \odot \bfs + \bfn \odot (\bfone - \bfs)
\end{equation}
of $\bfx$ and $\bfn\sim\CV$. As before we write $\CV_{\bfs}$ for the resulting distribution of $\bfy$. This is a generalisation of the obfuscation introduced in \cref{sec:ratedistortion} which can be recovered by choosing $\bfs$ equal to one on $S$ and zero on $S^c$. The natural relaxation of the set size $|S|$ is the norm $\|\bfs\|_1 = \sum_{i=1}^{d} |s_i|$. As before we define the expected distortion and rewrite it in its bias-variance decomposition
\begin{equation} \label{eq:bias-variance}
  D(\bfs) 
  = \E_{\bfy \sim \CV_{\bfs}}\left[\frac{1}{2}\left(\Phi(\bfx)-\Phi(\bfy)\right)^2 \right]
  = \frac{1}{2}\left(\Phi(\bfx)-\E_{\bfy\sim\CV_\bfs}\ekl{\Phi(\bfy)}\right)^2 + \frac{1}{2}\V_{\bfy\sim\CV_\bfs}\ekl{\Phi(\bfy)},
\end{equation}
where $\V$ denotes the covariance matrix. The expected distortion is determined by the first and second moment of the output layer distribution.
The exact calculation of expectation values and variances for arbitrary functions is in itself already a hard problem. One possibility to overcome this issue is to approximate the expectation by a sample mean. Depending on the dimension $d$ and the distribution $\CV$ sampling might be infeasible. Thus, we focus on a second possibility, which takes the specific structure of $\Phi$ more into account. From \eqref{eq:contobfuscation} it is straight-forward to obtain the first and second moment 
\[\E\ekl{\bfy} = \bfx\odot\bfs + \E\ekl{\bfn}\odot(\bfone-\bfs)\quad\text{and}\quad\V\ekl{\bfy}=\diag(\bfone-\bfs)\V\ekl{\bfn}\diag(\bfone-\bfs)\] 
of the input distribution $\CV_\bfs$. It remains to transfer the moments from the input to the output layer. This is discussed in \cref{sec:adf}. Instead of the hard constraint $D(\bfs)\leq\epsilon$ as in \eqref{eq:rate} we formulate the continuous rate minimisation problem in its Lagrangian formulation
\begin{equation}\label{eq:contopt}
    \minimize  \quad D(\bfs) + \lambda \|\bfs\|_1 \quad \subjectto \quad \bfs\in[0,1]^d
\end{equation}
with a regularisation parameter $\lambda>0$. We call this approach to obtaining relevance scores for classifier decisions RDE (Rate-Distortion Explanation). Depending on the activation function, the distortion does not need to be differentiable. However, the ReLU activation is differentiable almost everywhere. As commonly done during the training of neural networks, we simply use (projected) gradient descent to find a stationary point of \eqref{eq:contopt}. \par  

\subsection{Assumed density filtering}\label{sec:adf}
To address the challenge of efficiently approximating the expectation values in \eqref{eq:bias-variance} we utilise the layered structure of $\Phi$ and propagate the distribution of the neuron activations through the network. 
For this, we use an approximate method, called assumed density filtering (ADF), see for example \cite{Minka:2001:FAA:935427, Boyen:1998:TIC:2074094.2074099}, which has recently been used for ReLU neural networks in the context of uncertainty quantification \cite{gast-roth-8578453}. 
In a nutshell, at each layer we assume a Gaussian distribution for the input, transform it according to the layers weights $\bfW$, biases $\bfb$, and activation function $\varrho$, and project the output back to the nearest Gaussian distribution (w.r.t.\ KL-divergence). This amounts to matching the first two moments of the distribution \cite{Minka:2001:FAA:935427}. We now state the ADF rules for a single network layer. Applying these repeatedly gives us a way to propagate moments through all layers and obtain an explicit expression for the distortion. \par 

Let $\bfz\sim\CN\kl{\bfmu,\bfSigma}$ be normally distributed with some mean $\bfmu$ and covariance $\bfSigma$. An affine linear transformation preserves Gaussianity and acts on the mean and covariance in the well-known way
 \begin{equation}\label{eq:affinevar}
  \E\ekl{\bfW\bfz+\bfb} = \bfW \bfmu + \bfb\quad\text{and}\quad
  \V\ekl{\bfW\bfz+\bfb} = \bfW \bfSigma \bfW^\ast. 
 \end{equation}
The ReLU non-linearity $\varrho$ presents a difficulty as it changes a Gaussian distribution into a non-Gaussian one.  Let $f$ and $F$ be the probability density and cumulative distribution function of the univariate standard normal distribution, let $\bfsigma$ be the vector of the diagonal entries of $\bfSigma$, and $\bfeta = \bfmu \oslash \bfsigma$. Then as in \cite[Equation 10a]{gast-roth-8578453} we obtain
\begin{equation*}
     \E\ekl{\varrho(\bfz)} = \bfsigma \odot f(\bfeta) + \bfmu \odot  F(\bfeta).
\end{equation*}
Unfortunately the off-diagonal terms of the covariance matrix of $\varrho(\bfz)$ are thought to have no closed form solution \cite{fayed2014evaluation}. We either make the additional assumption that the network activations within each layer are uncorrelated, which amounts to propagating only the diagonal $\V_{\textrm{diag}}$ of the covariance matrices through the network and simplifies \eqref{eq:affinevar} to
\begin{equation*}
    \V_{\textrm{diag}}\ekl{\bfW\bfz+\bfb} = \kl{\bfW\odot\bfW} \bfsigma, 
\end{equation*}
and results, as also seen in \cite[Equation 10b]{gast-roth-8578453}, in 
\begin{equation*}
\V_{\textrm{diag}}\ekl{\varrho(\bfz)} = \bfmu\odot\bfsigma\odot f(\bfeta) + \kl{\bfsigma^2 + \bfmu^2} \odot F(\bfeta) - \E\ekl{\varrho(\bfz)}^2.
\end{equation*}
Or, we use an approximation for the full covariance matrix
\begin{equation}
 \V\ekl{\varrho(\bfz)} \approx \bfN \bfSigma \bfN,  \label{eq:multi-reluvar}
\end{equation}
with $\bfN = \diag\kl{ F(\bfeta) }$. This ensures positive semi-definiteness and symmetry. Depending on the network size it is usually infeasible to compute the full covariance matrix at each layer. However, if we choose a symmetric low-rank approximation factorisation $\V\ekl{\bfy}\approx \bfQ\bfQ^\top$ at the input layer with $\bfQ\in\R^{d\times r}$ for $r\ll d$ (for example half of a truncated singular value decomposition), then the symmetric update \eqref{eq:multi-reluvar} allows us to propagate only one of the factors through the layers. The full covariance is then only recovered at the output layer. This immensely reduces the computational cost and memory requirement. 

Altogether, combining the affine linear with the non-linear transformation, tells us how to propagate the first two moments through a ReLU neural network in the ADF framework. We investigate both the \emph{diagonal} as well as the \emph{low-rank} approximation to the covariance matrix in our numerical inquiry.

\section{Numerical experiments}\label{sec:experiments}

We present numerical experiments for interpretable neural networks comparing our proposed method RDE to several state-of-the-art methods. We generate relevance mappings for greyscale images of handwritten digits from the MNIST dataset \cite{mnist726791} as well as color images from the STL-10 dataset \cite{coates2011stl}.

As reference distribution $\CV$ we consider a Gaussian distribution with mean and variance or low-rank factorisation of the covariance matrix estimated from the training data set as described in \cite{chan1982samplevariance, rehurek2011low-rank-variance}. The low-rank factorisation is obtained from a truncated singular value decomposition with rank $r=30$. During the gradient descent optimisation we use a momentum term with factor $0.85$ and determine the step size according to a backtracked Armijo linesearch \cite{nocedal2006optimisation}. We initialise with the constant map $\bfs= 0.2 \cdot \bfone_d$ and set the regularisation parameter to $\lambda=0.5$ for MNIST and $\lambda=0.05$ for STL-10. The parameters were tuned by grid-search using only one image per data set.

We compare our method to Layer-wise Relevance Propagation (LRP) \cite{bach-plos15}, Deep Taylor decompositions \cite{MONTAVON20181}, Sensitivity Analysis \cite{simonyan2013deep}, SmoothGrad \cite{smilkov2017smoothgrad}, Guided Backprop \cite{springenberg2014guidedbackprop}, SHAP \cite{NIPS2017_7062shap}, and LIME \cite{marcotr2016lime}. For this we use the Innvestigate\footnote{\url{https://github.com/albermax/innvestigate}} \cite{DBLP:journals/corr/abs-1808-04260}, SHAP\footnote{\url{https://github.com/slundberg/shap}}, and LIME\footnote{\url{https://github.com/marcotcr/lime}} toolboxes.

Different interpretation approaches produce differently scaled and normalised relevance mappings. Some methods generate non-negative mappings corresponding to the importance \emph{for} the classifier score, whereas other methods also generate negative relevance that can be interpreted as speaking \emph{against} a classifier decision. This has to be dealt with carefully to allow for a fair comparison.

We propose a variant of the \emph{relevance ordering}-based test introduced in \cite{WojBGM2017pixelflip}. Each relevance mapping induces an ordering of the input signal components by sorting them according to their relevance score (breaking ties randomly). We start with a completely random signal, replace increasingly large parts of it by the original input, and observe the change in the classifier score. This is then averaged over multiple random input samples. A good relevance mapping will lead to a fast convergence to the score of the original signal when the most relevant components are fixed first. In other words the distortion quickly drops to zero. The described process allows to us approximately evaluate the rate-distortion function.

The focus of this paper is the interpretability of neural networks, not on their training so we will keep the description of the training process quite brief.

\paragraph{MNIST experiment} We trained a convolutional neural network (three convolution layers each followed by average-pooling and finally two fully-connected layers and softmax output) end-to-end up to a test accuracy of 0.99. The standard training/validation/testing split of the data was used.

The relevance mappings for one example image of the digit six are shown in \cref{fig:mnist-mappings}. The mappings are calculated for the pre-softmax score of the class with the highest activation. Both variants of our proposed method generate similar results and highlight an area at the top that distinguishes the digit six from, for example, the digits zero and eight. The relevance-ordering test results are shown in \cref{fig:combined-rate} (left). We observe that the expected distortion drops fastest for our proposed method indicating that the most relevant components were correctly identified.

\begin{figure}
    
  \centering\scriptsize
  \begin{tabular}{c@{\;\;}c@{\;\;}c@{\;\;}c@{\;\;}c}
  
    image & SmoothGrad \cite{smilkov2017smoothgrad} & LRP-$\alpha$-$\beta$ \cite{bach-plos15} & SHAP \cite{NIPS2017_7062shap} & RDE (diagonal) \\
    \includegraphics[height=2.0cm]{./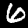} &
    \includegraphics[height=2.0cm]{./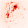} &
    \includegraphics[height=2.0cm]{./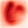} &
    \includegraphics[height=2.0cm]{./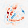} &
    \includegraphics[height=2.0cm]{./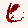} \\[.5em]
    
    Sensitivity \cite{simonyan2013deep} & Guided Backprop \cite{springenberg2014guidedbackprop} & Deep Taylor \cite{MONTAVON20181} & LIME \cite{marcotr2016lime} & RDE (low-rank) \\
    \includegraphics[height=2.0cm]{./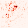} &
    \includegraphics[height=2.0cm]{./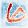} &
    \includegraphics[height=2.0cm]{./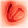} &
    \includegraphics[height=2.0cm]{./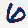} &
    \includegraphics[height=2.0cm]{./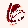} 
  
  \end{tabular}

    \caption{Relevance mappings generated by several methods for an image from the MNIST dataset classified as \emph{digit six} by our network. The colourmap indicates positive relevances as red and negative relevances as blue.}
    \label{fig:mnist-mappings}
\end{figure}


\paragraph{STL-10 experiment} We use a VGG-16 network \cite{DBLP:journals/corr/SimonyanZ14a} pretrained on the Imagenet dataset and retrain on the STL-10 dataset in three stages. First, we train only the fully-connected layers for 500 epochs, then end-to-end for another 500 epochs, and finally after replacing all max-pooling layers by average-pooling layers again end-to-end for another 500 epochs to a final test accuracy of $0.935$.

The relevance mappings for one example image of a dog are shown in \cref{fig:stl-mappings}. As before the mappings are calculated for the pre-softmax score of the class with the highest activation. The difference between both variants of our proposed method is more pronounced here. The low-rank variant captures finer details, for example in the dog's face. The relevance-ordering test results are shown in \cref{fig:combined-rate} (right). We observe that although our method does not obtain the smallest expected distortions across all rates it has the fastest dropping distortion for low rates indicating that the most relevant components were correctly identified. This is to be expected as our method generates sparse relevance scores highlighting only few of the most relevant components and is not designed for high rates.

\begin{figure}
    
  \centering\scriptsize
  \begin{tabular}{c@{\;\;}c@{\;\;}c@{\;\;}c@{\;\;}c}
  
    image & SmoothGrad \cite{smilkov2017smoothgrad} & LRP-$\alpha$-$\beta$ \cite{bach-plos15} & SHAP \cite{NIPS2017_7062shap} & RDE (diagonal) \\
    \includegraphics[height=2.0cm]{./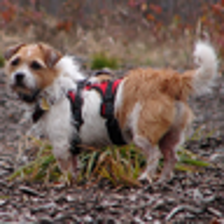} &
    \includegraphics[height=2.0cm]{./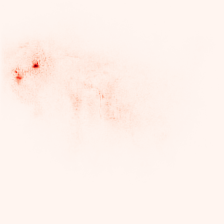} &
    \includegraphics[height=2.0cm]{./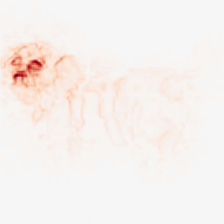} &
    \includegraphics[height=2.0cm]{./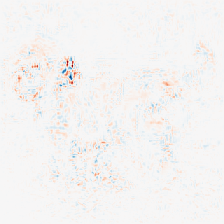} &
    \includegraphics[height=2.0cm]{./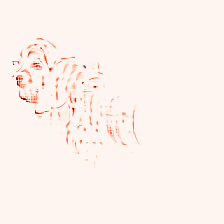} \\[.5em]
    
    Sensitivity \cite{simonyan2013deep} & Guided Backprop \cite{springenberg2014guidedbackprop} & Deep Taylor \cite{MONTAVON20181} & LIME \cite{marcotr2016lime} & RDE (low-rank) \\
    \includegraphics[height=2.0cm]{./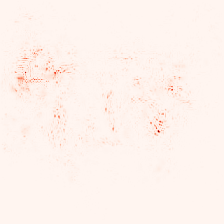} &
    \includegraphics[height=2.0cm]{./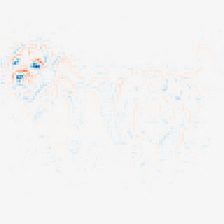} &
    \includegraphics[height=2.0cm]{./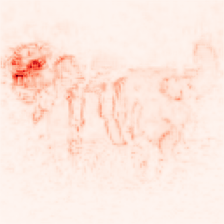} &
    \includegraphics[height=2.0cm]{./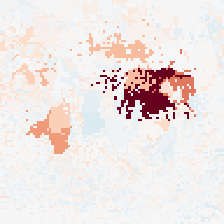} &
    \includegraphics[height=2.0cm]{./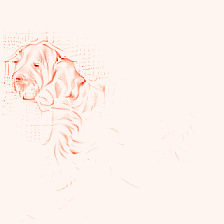}
  
  \end{tabular}

    \caption{Relevance mappings generated by several methods for an image from the STL-10 dataset classified as \emph{dog} by our network. The colourmap indicates positive relevances as red and negative relevances as blue.}
    \label{fig:stl-mappings}
\end{figure}


\begin{figure}
    \input{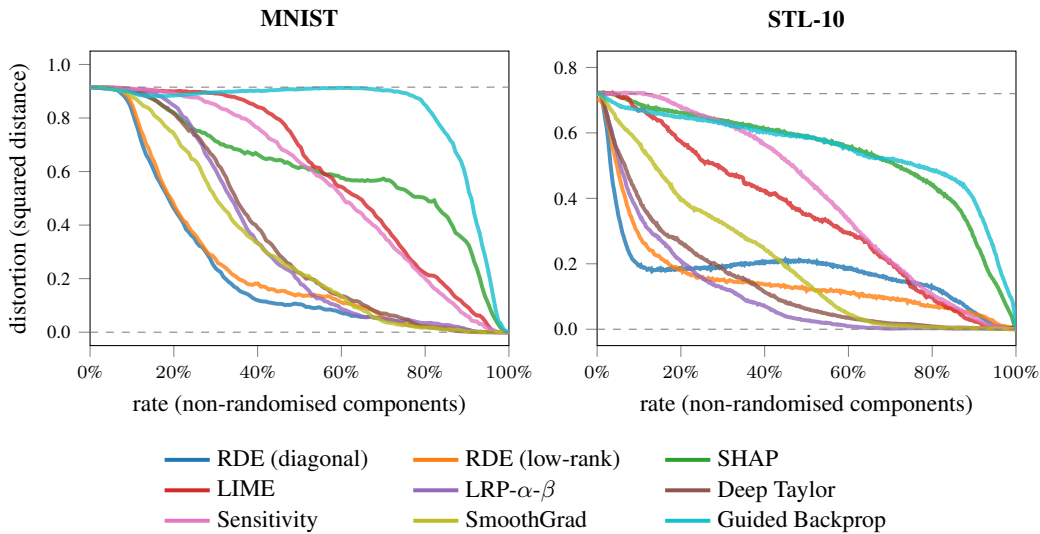}
    \caption{Relevance ordering test results (approximate rate-distortion function) of several methods for the MNIST (left) and STL-10 (right) dataset. An average result over 50 images from the respective test set (5 images per class randomly selected) and 512 (MNIST) and 64 (STL-10) random input samples per image is shown.}
    \label{fig:combined-rate}
\end{figure}

\section{Conclusion}
We introduced a formal rate-distortion framework for explaining classification decisions and analysed the computational complexity of the arising optimisation problem. We saw that in the worst case it is hard to solve and even hard to approximate, which justifies the use of heuristic explanation methods in practical applications. We proposed our method RDE (Rate-Distortion Explanation), which involves a minimisation procedure. We compared it numerically to previous methods and observed that it performs best in the following sense: it captures the smallest set of relevant components, which leads to the steepest curve in the rate-distortion function for small rates.

\subsubsection*{Acknowledgments}

The authors would like to thank Philipp Petersen for several fruitful discussions during the early stage of the project. J. M. and S. W. acknowledge support by DFG-GRK-2260 (BIOQIC). S. H. is grateful for support by CRC/TR 109 ``Discretization in Geometry and Dynamics''.  G. K. acknowledges partial support by the Bundesministerium f{\"u}r Bildung und Forschung (BMBF) through the ``Berliner Zentrum f{\"u}r Machinelles Lernen'' (BZML), by the Deutsche Forschungsgemeinschaft (DFG) through Grants CRC 1114 ``Scaling Cascades in Complex Systems'', CRC/TR 109 ``Discretization in Geometry and Dynamics'', DFG-GRK-2433 (DAEDALUS), DFG-GRK-2260 (BIOQIC), SPP 1798 ``Compressed Sensing in Information Processing'' (CoSIP), by the Berlin Mathematics Research Centre MATH+, and by the Einstein Foundation Berlin.

\bibliographystyle{abbrv}
\bibliography{references}

\end{document}